\documentclass[accepted]{uai2026} 
                        
\usepackage[american]{babel}

\usepackage{subcaption}
\usepackage{amsmath}
\usepackage{amssymb}
\usepackage{mathtools}
\usepackage{amsthm}
\usepackage{url}
\usepackage{tikz}
\usepackage{ifthen,amssymb}
\usepackage{mathrsfs}
\usepackage{amsfonts,mathtools}
\usepackage{enumerate}
\usepackage{amsthm}
\usepackage{booktabs} 
\usepackage{multirow}
\usepackage{mathtools}
\usepackage{amsmath}
\usepackage{cases}
\usepackage{appendix}
\usepackage{hyperref}
\usepackage{color}
\usepackage{bm}
\usepackage{bbm}
\usepackage{xcolor}
\usepackage{cancel}
\usepackage{microtype}
\usepackage{wrapfig}
\usepackage{enumitem}
\usepackage{verbatim}
\usepackage{graphicx}
\usepackage{makecell}
\usepackage{array}
\usepackage{tikz}
\usetikzlibrary{arrows.meta}
\usetikzlibrary{fit,backgrounds}

\newcommand{\V}[0]{\mathcal{V}}

\newcommand{\R}[0]{\mathbb{R}}

\newcommand{\Kc}[0]{\mathcal{K}_c}
\newcommand{\Ke}[0]{\mathcal{K}_e}
\newcommand{\Z}[0]{\mathcal{Z}}
\newcommand{\LL}[0]{\mathcal{L}}

\newcolumntype{Y}{>{\raggedright\arraybackslash}X}

\newcommand{\vZc}{\textit{probe}_{Z_c}}
\newcommand{\vZe}{\textit{probe}_{Z_e}}

\newcommand{\xsel}[1][t]{g_{\text{sel}}}

\newcommand{\Trsoft}{\mathcal{T}}

\usepackage[capitalize,noabbrev]{cleveref}

\theoremstyle{plain}
\newtheorem{theorem}{Theorem}[section]

\theoremstyle{definition}

\theoremstyle{remark}
\newtheorem{remark}[theorem]{Remark}

\usepackage{tabularx}
\definecolor{lightblue}{rgb}{0.5, 0.7, 0.95}

\usepackage{natbib} 
\usepackage{mathtools} 
\usepackage{booktabs} 
\usepackage{tikz} 

\title{Inference Time Causal Probing in LLMs}

\author[1]{\href{mailto:<sadegh.khorasani@epfl.ch>?Subject=Your UAI 2025 paper}{Sadegh Khorasani}{}} 
\author[2]{\href{mailto:<s.salehkaleybar@liacs.leidenuniv.nl>?Subject=Your UAI 2025 paper}{Saber Salehkaleybar}{}}
\author[3]{\href{mailto:<negar.kiyavash@epfl.ch>?Subject=Your UAI 2025 paper}{Negar Kiyavash}{}}
\author[1]{\href{mailto:<matthias.grossglauser@epfl.ch>?Subject=Your UAI 2025 paper}{Matthias Grossglauser}{}}

\affil[1]{%
    School of Computer and Communication Sciences, EPFL,\\ Lausanne, Switzerland
}

\affil[2]{%
    Leiden Institute of Advanced Computer Science (LIACS), Leiden University, \\Leiden, The Netherlands
  }

\affil[3]{%
    College of Management of Technology, EPFL, Lausanne, Switzerland
}

\begin{document}
\maketitle

\begin{abstract}
Causal probing methods aim to test and control how internal representations influence the behavior of generative models. In causal probing, an intervention modifies hidden states so that a property takes on a different value. Most existing approaches define such interventions by training an auxiliary probe classifier, which ties the method to a specific task or model and risks misalignment with the model’s predictive geometry. We propose Hidden-state Driven Margin Intervention (HDMI), a probe-free, gradient-based technique that directly steers hidden states using the model's native output. HDMI applies a margin objective that increases the probability of a target continuation while decreasing that of the source, without relying on probe classifiers. We further introduce a lookahead variant (LA-HDMI) for text editing that backpropagates through the softmax embeddings, modifying the current hidden state so that the likelihood of user-specified tokens increases in next token generations while preserving fluency. To evaluate interventions, we measure completeness (whether the targeted property changes as intended) and selectivity (whether unrelated properties are preserved), and report their harmonic mean as an overall measure of reliability. HDMI consistently achieves higher reliability than prior methods on the LGD agreement corpus and the CausalGym benchmark, across Meta-Llama-3-8B-Instruct, and Pythia-70M.

\end{abstract}

\section{Introduction}
\label{sec:intro}
In the study of generative models, a key goal is to understand what latent properties they encode and how these properties shape generation. By property, we mean a linguistic feature of the input that the model may represent internally and that can take on distinct values. For instance, consider the sentence: ``\emph{The cats run across the yard}". Here, the subject has the property of plural number. If we change to ``\emph{The cat runs across the yard}", the property would instead be singular number.

A standard diagnostic approach, \textbf{correlational probing},  trains lightweight classifiers (commonly called “probes”) on hidden states to decode a property (such as sentiment, syntactic role, or topic). These probes reveal what value of property is present in hidden states, but they do not show whether/how the model actually uses this property in predicting the next words. For example, the probe might read off ``plural number" but this does not establish whether plurality is what drives the model to predict ``run" instead of ``runs" for the next token.


\textbf{Causal probing} addresses the aforementioned limitation by \emph{interventions} on the hidden state and tests whether such perturbations alter the next-token distribution of the model. For instance, if we modify the hidden state so that the subject property changes from singular to plural, we would like to see whether the model accordingly modifies the verb from singular to plural form.  Therefore, causal probing investigates not only what information is encoded, but also how it is used in prediction \citep{elazar2021amnesic, ravfogel2020null, kumar2022probing}.
\footnote{Following prior causal probing work \citep{canby2024reliable,ravfogel2021counterfactual,davies2023competence}, we sometimes refer to altered property as ``counterfactuals". However, in the structural causal model framework \citet{pearl2009causality}, counterfactuals require re-evaluating a system under the same randomness, whereas our setting just replaces a hidden representation with a modified one. Thus, our experiments are more accurately considered as interventional queries rather than counterfactual ones.}

In practice, many causal probing methods rely on training a probe to obtain a direction along which to perturb the hidden state. For example, PGD \citep{madry2018towards} trains a probe to classify subject number from hidden states, and then uses the probe’s gradient to adjust the hidden state such that the perturbed state is classified by the probe with the opposite label (e.g., flipping the label from singular to plural).
However, this reliance on probes introduces extra property-specific supervision and training costs, since a separate probe must be trained for each property of interest. Moreover, there is a risk of misalignment in the sense that the probe imposes its own classification boundary on the hidden state which may not coincide with how the model internally encodes and uses a property for generating next tokens.


This motivates our framing of \textbf{inference-time causal probing}. By this, we mean interventions applied entirely at inference time, requiring no probes or retraining of the generative model. Our proposed method, \textbf{Hidden-state Driven Margin Intervention (HDMI)}, directly leverages the generative model’s output head as a native readout. HDMI performs a lightweight gradient-based update to hidden states that shifts the model’s output distribution from the source continuation (e.g., ``runs" for a singular subject) to the interventional target (e.g., ``run” for a plural subject). Concretely, it uses the gradient of the logit margin (defined as the difference between the target and source token logits at the next decoding step), increasing the probability of the target being selected.

The contributions of the paper are as follows:
\begin{itemize}
\item Unlike prior approaches \citep{davies2023competence, ravfogel2021counterfactual,madry2018towards,goodfellow2015explaining} that rely on probes, HDMI directly leverages the generative model head to obtain intervention directions, ensuring alignment with the model’s own predictive geometry. The objective of HDMI is to maximize the logit margin that increases the target continuation’s logit while decreasing the source’s. This formulation naturally extends to cases where a property can be expressed by multiple acceptable continuations (e.g., plural verbs ``are/were" versus singular verbs ``is/was").
Moreover, the gradient computation of the logit margin reduces to a closed-form matrix–vector product, making HDMI computationally lightweight and easy to deploy (Section \ref{sec:HDMI-loss}). 
\item We propose a ``lookahead" variant of HDMI for text editing, where a user provides an edited version for a given input, and the goal is to perturb hidden states to steer generation toward the edited version while preserving fluency.
Lookahead HDMI (LA-HDMI) introduces a softmax embedding transition in the token generation process and modifies the hidden state of every decoding step influenced by future edit positions (Section \ref{sec:HDMI6}).

\item Using the evaluation framework of \citet{canby2024reliable}, compared to previous work, we show that HDMI achieves strong performance across LGD agreement and CausalGym suites. We report completeness and selectivity, metrics that show that interventions alter target properties but avoid altering unrelated properties (Section \ref{sec:exp-setup}).
\end{itemize}

\section{Related Work}
\label{sec:related}

Our work is related to three research directions: (i) probing methods for analyzing the linguistic information encoded in hidden states; (ii) causal interventions on internal mechanisms in language models; and (iii) controllable generation and editing methods that steer model behavior at inference time. 

\paragraph{Probing.}
A long line of work uses lightweight classifiers—“probes”—to read out linguistic properties from intermediate representations \citep{belinkov2022probing, alain2016understanding,hewitt2019designing,pimentel2020information}. Although such probes revealed that models encode rich structure, their interpretability and causal effects on the model predictions have been debated.
The fact that a property can be predicted from embedding representations (or correlated) does not imply the model relies on that property. Some methods have proposed controls to check that a probe's accuracy is not due to dataset quirks or the probe memorizing labels \citep{hewitt2019designing,pimentel2020information}. 
This motivated causal probing, which intervenes on representations and measures behavioral impact.

\paragraph{From correlation to causation.}
Recent work shifts from correlational probing to causal analysis that intervenes on internal representations
\citep{elazar2021amnesic,tucker2021if,ravfogel2021counterfactual,ravfogel2022linear, davies2023competence}. 
Concept–erasure methods attempt to remove information about attributes from representations, often to debias \citep{elazar2018adversarial} or perform causal analysis. Iterative Nullspace Projection (INLP) iteratively removes linearly predictive subspaces for a given attribute from representations and projects representations onto the nullspace of linear predictors for a target attribute \citep{ravfogel2020null,ravfogel2022linear}. ``Amnesic probing'' couples such an erasure with behavioral checks to study whether removing present information about a property changes the model predictions \citep{elazar2021amnesic}. Counterfactual interventions instead modify the hidden state so that the property changes from its factual value to a counterfactual value. Linear counterfactuals such as AlterRep push representations across rowspace hyperplanes to the counterfactual side \citep{ravfogel2021counterfactual}. Nonlinear counterfactuals include gradient‑based interventions (GBIs) that optimize against an attribute probe with FGSM/PGD‑style updates \citep{goodfellow2015explaining,madry2018towards}. 
However, concerns persist that interventions may incompletely transform the target property or inadvertently alter non‑target properties \citep{kumar2022probing, canby2024reliable}. 
In contrast, our approach does not learn a probe for the intervention and instead exploits a direct, model‑native gradient signal from the language model head, aligning the intervention with the model’s predictive geometry at the specific decoding step. 


\paragraph{Controllable model behavior.}
Plug‑and‑Play Language Models (PPLM) perform gradient‑based updates to hidden states to guide the token generations (e.g., change the text topic) using external attribute classifiers, alongside KL regularization to remain on‑manifold \citep{dathathri2019plug}. GeDi \citep{krause2020gedi} and DExperts \citep{liu2021dexperts} provide alternative guidance via discriminators or expert/anti‑expert mixtures to make them more controllable. More recently, representation/activation engineering has explored linear directions to change the behaviors of models \citep{turner2023activation}. 
{
We do not require training an external attribute model or probe. HDMI uses a targeted, per-instance, per-step gradient to change a specified continuation.

\paragraph{Positioning and relation to other literatures.}
HDMI relates to several adjacent literatures but differs in goal and methods. (i) Causal mediation analysis, activation patching, and attribution patching intervene by patching/replacing intermediate activations from a source forward pass into another (e.g., clean/corrupted or factual/counterfactual prompts) to {localize} or {attribute} which internal components (layers/heads/MLPs/positions) mediate an effect \citep{vig2020investigating, sharma2025llms, feucht2025dual, sankaranarayanan2025disjoint}. Attribution patching provides a fast, linearized approximation by using two forward passes plus one backward pass to assign attribution scores to edges/components for the same objective \citep{syed2024attribution, ahsan2025elucidating}. 
In contrast, HDMI assumes that the intervention site is predefined, and its goal is to locally modify a targeted linguistic property encoded in the hidden state. Moreover, it computes a per-instance intervention direction at that decoding step instead of transferring activations across two forward passes.
(ii) {Model editing/representation fine-tuning} methods (e.g., ROME, ReFT) change model parameters to make {persistent} behavioral edits \citep{meng2022locating,wu2404reft}, whereas HDMI/LA-HDMI leave parameters unchanged and apply interventions only during the targeted decoding steps.
}


\section{Problem setting and notations} 
\label{sec:problem}

Let $M$ be a pretrained language model (LM) with $L$ transformer layers and vocabulary~$\V$.  
For an input sequence $x_{1:T} = (x_1,\dots,x_T)$ and a decoding step $T+1$, we denote the layer–$\ell$ hidden representation by
$
h_{\ell}(x_{1:T}) \;\in\; \mathbb{R}^D,
$ where $ \ell \in \{1,\dots,L\},$
and $D$ is the model’s hidden size.
Unless stated otherwise, we drop the position subscript and write $h_\ell(x)$ for $h_{\ell}(x_{1:T})$.
The model’s next-token logit vector is
$
\phi(x_{1:T}) \;=\; W_{\!U}\,h_{L}(x_{1:T}) + b
 \in\mathbb{R}^{|\mathcal V|},
$
where $W_{\!U}\in\mathbb{R}^{|\mathcal V|\times D}$ is the unembedding and $b\in\mathbb{R}^{|\ V|}$ is the bias.  
Unless stated otherwise, we drop the position subscript and write $\phi(x)$ for $\phi(x_{1:T})$ by default.
The probability distribution for generating the next token is obtained via softmax,
\[
P_M(\,\cdot \mid x_{1:T}) \;=\; \mathrm{softmax}\!\bigl(\phi(x_{1:T})\bigr) 
\;\in\; \Delta^{|\mathcal V|-1},
\]
where $\Delta^{|\mathcal V|-1}$ denotes the probability simplex over~$\mathcal V$.
We assume a family of discrete latent linguistic properties $\Z=\{Z_1,Z_2,\dots\}$ defined on input texts. \begin{wrapfigure}{r}{0.4\linewidth}
\vspace{-.2cm}
    \centering
    \begin{tikzpicture}[
    >=Stealth,
    every node/.style={font=\small, align=center},
    latent/.style={circle, draw, line width=0.2pt, minimum size=5mm, inner sep=0pt},
    observed/.style={latent, fill=gray!18},
    edge/.style={->, semithick}
  ]
  \node[observed] (X)  at (0,0)     {$X$};
  \node[observed] (Y)  at (2,0)   {$Y$};
  \node[latent]   (Zc) at (0,1)   {$Z_c$};
  \node[latent]   (Ze) at (2,1) {$Z_e$};

  \draw[edge] (Zc) -- (X);
  \draw[edge] (Ze) -- (X);
  \draw[edge] (Zc) -- (Y);

  \begin{scope}[on background layer]
    \node[
      draw, rounded corners=1.2mm, inner sep=4pt,
      fill=gray!6, draw=gray!35,
      fit=(X) (Y),
    ] {};
    \node[
      draw, rounded corners=1.2mm, inner sep=4pt,
      fill=blue!4, draw=blue!30,
      fit=(Zc) (Ze),
    ] {};
  \end{scope}
\end{tikzpicture}
    \caption{$X$ is the sequence $x_{1:T}$  and $Y$ is the token at $T+1$. $Z_c$ and $Z_e$ are latent linguistic properties.
}
\vspace{-.5cm}
    \label{fig:properties_graph}
\end{wrapfigure} 
The classical ``probing'' paradigm evaluates whether a hidden vector $h_\ell(x)$
correlates with a linguistic property $Z \in \Z$ by training a supervised classifier on hidden representations $h_\ell(x)$.
Causal probing instead asks how or whether  manipulating the hidden representation at layer~$\ell$ changes a linguistic property $Z \in \Z$ and the model's next-token prediction denoted as ~$Y$.


Throughout the paper, we consider two types of properties. 
First, a \emph{causal} property $Z_c$ that directly governs the model’s next-token choice $Y$ at position $T+1$. 
For instance, in English subject–verb agreement, $Z_c\!\in\!\{\textsc{sg},\textsc{pl}\}$ encodes the subject number (singular or plural) and should impose the verb agreement at the next token prediction (e.g., $Y\in \{\textit{is}, \textit{are}\}$). 
Second, an \emph{environment} or \emph{nuisance} property $Z_e$ (e.g., the number of non-subject nouns in a prepositional phrase) that varies with the context and does not affect $Y$ (e.g., non-subject noun number does not affect the verb,
see Figure \ref{fig:properties_graph}).  
Both $Z_c$ and $Z_e$ take values in finite sets $\Kc$ and $\Ke$, respectively.

Let $f_\theta:\mathbb R^D\!\to\!\mathbb R^D$ be an interventional operator:  
\[
\tilde h_\ell(x)
\;=\;
f_\theta\!\bigl(h_\ell(x),\,z\!\rightarrow\!z'\bigr),
\quad\text{where }z'\!\neq\!z.
\]
This operator for a given $x$, intervenes on the hidden state to change the value of $Z$ from $z$ to $z'$




\section{Hidden‑state Driven Margin Intervention (HDMI)}
\label{sec:HDMI-loss}


Let $\phi:\R^{D}\!\to\!\R^{|\V|}$ be a differentiable logit
function (by default the LM head
$\phi(x)$; see Section~\ref{sec:problem}).
Consider the input sequence $x_{1:T}$
whose next–token distribution we want to predict. Let $h_\ell(x)$ denote the hidden state representation at layer~$\ell$. In the following examples, we shall denote the position $T+1$ by [MASK].

In causal probing, the dataset provides two mutually exclusive next–token continuations or labels:

\begin{itemize}
\item the \emph{source} token
      $v_a\in\V$, which is the token preferred by the model under its original distribution that realizes the factual value $z$
      of $Z_c$, and
\item the \emph{target}  token
      $v_b\in\V$, which realizes the counterfactual value $z'$
      of $Z_c$ we wish the model to adopt.
\end{itemize}
For instance, consider the sequence “\textit{The key [MASK] on the table}.” 
Without intervention, the base model might prefer $v_a=\texttt{is}$ at [MASK] because $z=\texttt{SG}$. 
If we wish to flip the subject‑number property to plural ($z'=\texttt{PL}$), the \emph{target} token at this step is $v_b=\texttt{are}$. 

Denote the indices of $v_a$ and $v_b$ in the logit vector $\phi$ by
$
\sigma \;=\; \mathrm{ID}(v_a),$ and $
\tau   \;=\; \mathrm{ID}(v_b).$
\paragraph{Single-token margin objective.}
We aim to find a perturbed hidden state
$\tilde h_\ell(x)$ that \emph{raises} the logit (or log-probability)
of the target token~$\tau$ while \emph{lowering} that of the
source token~$\sigma$ at the next decoding step.
We define the margin objective as follows:
\begin{equation}
\label{eq:HDMI-margin}
\mathcal L(x)
\;=\;
\phi(x)_{\tau} \;-\; \phi(x)_{\sigma}.
\end{equation}

Maximizing this margin objective increases the probability of the target token while decreasing that of the source token. We therefore apply gradient ascent on the hidden state $h_\ell(x)$. Note that for the final layer $L$, and for the $\phi$ defined in Section \ref{sec:problem}, because $\LL$ is linear in $h_L(x)$, computing the gradient is inexpensive:
\[
\nabla_{h_L} \mathcal L(x)
\;=\;
W_{\!U}^{\!\top}\bigl(e_{\tau}-e_{\sigma}\bigr),
\]

where $e_i$ is the $i$-th basis vector in $\R^{|\V|}$. For a general $\ell$:
$\nabla_{h_\ell} \mathcal L(x)
\;=\;
\bigl(\nabla_{h_\ell}\phi(x)\bigr)^{\!\top}
\bigl(e_{\tau}-e_{\sigma}\bigr),$
which requires computing a matrix–vector product, also a
computationally lightweight calculation.
{
To show the role of the margin objective in ~\eqref{eq:HDMI-margin} for reliably flipping the targeted property, we introduce the \emph{target-only} objective that promotes the logit of the target token but does not demote the source token:
\[
\mathcal{L}_{\text{target-only}}(x) \;=\; \phi(x)_{\tau}.
\]
The following theorem shows the theoretical justification of our margin loss and compare it with the target-only objective.
\begin{theorem}
\label{thm:main_margin_vs_target}

Let $z(h)=Wh+b$ be the final-layer logits and define the pairwise margin $m(h)=z_\tau(h)-z_\sigma(h)$. Moreover, let $d=w_\tau-w_\sigma$, where $w_i^\top$ is the $i$-th row of $W$. 
Under the constraint $\|\delta\|_2 \le \epsilon$, the maximum achievable margin increase satisfies $\max_{\|\delta\|_2 \le \epsilon} \big(m(h+\delta)-m(h)\big)=\epsilon\|d\|_2$, achieved by $\delta^\star=\epsilon\,d/\|d\|_2$. 
In contrast, the $\ell_2$-optimal target-only update $\delta_{\mathrm{TO}}=\epsilon\,w_\tau/\|w_\tau\|_2$ yields margin gain $m(h+\delta_{\mathrm{TO}})-m(h)=\epsilon\|d\|_2\cos\theta$, where $\theta$ is the angle between $d$ and $w_\tau$.

\end{theorem}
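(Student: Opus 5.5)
The argument rests on the fact that the margin is affine in $h$, so the whole theorem reduces to Cauchy--Schwarz. First I will expand $m(h+\delta)$ using $z(h)=Wh+b$. Since $z_i(h)=w_i^\top h+b_i$, we get
\[
m(h+\delta)-m(h)=(w_\tau-w_\sigma)^\top\delta=d^\top\delta,
\]
and the bias terms and $h$ cancel exactly. The margin gain is therefore a linear functional of the perturbation, independent of the base point $h$. Everything else follows from this identity.

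For the first claim, I will apply Cauchy--Schwarz: for any $\|\delta\|_2\le\epsilon$,
\[
d^\top\delta\le\|d\|_2\|\delta\|_2\le\epsilon\|d\|_2 .
\]
Substituting $\delta^\star=\epsilon d/\|d\|_2$ gives $d^\top\delta^\star=\epsilon\|d\|_2^2/\|d\|_2=\epsilon\|d\|_2$, so the bound is attained and the maximum equals $\epsilon\|d\|_2$. I will note the implicit assumption $d\neq 0$, i.e.\ $w_\tau\neq w_\sigma$. In the degenerate case $d=0$ the gain is identically zero for every $\delta$, the maximum is $0=\epsilon\|d\|_2$, and $\delta^\star$ is undefined, which I will mention in a remark. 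The equality case of Cauchy--Schwarz also shows that $\delta^\star$ is the unique maximizer.

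For the target-only claim, I will first justify the phrase ``$\ell_2$-optimal target-only update.'' The target-only objective changes by $w_\tau^\top\delta$, so the same Cauchy--Schwarz argument shows that $\delta_{\mathrm{TO}}=\epsilon w_\tau/\|w_\tau\|_2$ maximizes it over the $\epsilon$-ball, assuming $w_\tau\neq0$. I will then plug $\delta_{\mathrm{TO}}$ into the linear identity:
\[
d^\top\delta_{\mathrm{TO}}=\epsilon\,\frac{d^\top w_\tau}{\|w_\tau\|_2}=\epsilon\|d\|_2\frac{d^\top w_\tau}{\|d\|_2\|w_\tau\|_2}=\epsilon\|d\|_2\cos\theta,
\]
where $\cos\theta$ is defined in the usual way.

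There is no real obstacle; the only point needing care is the nondegeneracy conventions ($d\neq0$, $w_\tau\neq0$) so that $\theta$ and both updates are well defined. I will close with a one-line corollary: since $\cos\theta\le1$, the target-only gain never exceeds the margin-optimal gain, with equality iff $d$ is a positive multiple of $w_\tau$. The gain can even be negative when $\cos\theta<0$, i.e.\ when $w_\tau^\top w_\sigma>\|w_\tau\|_2^2$.
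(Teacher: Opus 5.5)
Your proposal is correct and follows the paper's proof essentially step for step: you reduce the margin change to the linear functional $d^\top\delta$, bound it by Cauchy--Schwarz with equality at $\delta^\star=\epsilon d/\|d\|_2$, and substitute $\delta_{\mathrm{TO}}$ to obtain $\epsilon\|d\|_2\cos\theta$. Your additional remarks are correct refinements that the paper leaves implicit; the paper only illustrates the negative case with the example $w_\sigma=2w_\tau$, whereas you address the degenerate cases $d=0$ and $w_\tau=0$, the uniqueness of the maximizer, and the general condition $w_\tau^\top w_\sigma>\|w_\tau\|_2^2$ for a negative gain.
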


\begin{remark}
Note that $m(h)=z_\tau(h)-z_\sigma(h)$ coincides with our margin loss in~\eqref{eq:HDMI-margin}. 
Theorem~\ref{thm:main_margin_vs_target} shows that a target-only update may decrease this margin (if $\cos\theta<0$). 
Although such an update increases $z_\tau$, it does not control the behavior of $z_\sigma$. 
Consequently, the source logit may increase more than the target logit. 
This issue arises because the source term is omitted from the objective.

\end{remark}



}

\paragraph{Multi-token extension.}
Sometimes the property can be realized by several acceptable tokens at the very next step. For example, for subject-verb agreement, we may wish to use both \texttt{are} and \texttt{were} for the counterfactual value $z'=\textsc{PL}$, and  \texttt{is} and \texttt{was} for $z=\textsc{SG}$.

HDMI can optimize a set-based margin. 
Let
$
\mathcal T^{+}=\{\tau_1,\dots,\tau_p\},
\mathcal T^{-}=\{\sigma_1,\dots,\sigma_q\},
$
and define the loss as:
\begin{equation}
\label{eq:HDMI-set-margin}
\mathcal L_{\text{set}}(x)=
\sum_{i=1}^{p}\,\phi(x)_{\tau_i}-
\sum_{j=1}^{q}\,\phi(x)_{\sigma_j}.
\end{equation}

Let $u^{+}=\sum_{i} e_{\tau_i}$ and $u^{-}=\sum_{j} e_{\sigma_j}$ so that
$\mathcal{L}_{\text{set}}(x)=\phi(x)^{\!\top}(u^{+}-u^{-})$.
For interventions at the final layer ($\ell=L$),
$\nabla_{h_L}\,\mathcal{L}_{\text{set}}(x)
\;=\;
W_U^{\!\top}\bigl(u^{+}-u^{-}\bigr).$
The gradient for general $\ell$ is
$
\nabla_{h_\ell}\,\mathcal{L}_{\text{set}}(x)
\;=\;
\bigl(\nabla_{h_\ell}\phi(x)\bigr)^{\!\top}
\bigl(u^{+}-u^{-}\bigr),
$
which is the same as equation~\ref{eq:HDMI-margin} when
$p=q=1$. Starting from the original representation
$h^{(0)} = h_\ell(x)$, we perform $K$ steps of gradient ascent:
\begin{align}
g^{(k)}      \;=\; \nabla_{h^{(k)}} \LL(x), \qquad
h^{(k+1)}   \;=\; h^{(k)} + \alpha\, g^{(k)},
\end{align}
where $\alpha>0$ is the step size. The final state representation $\tilde h_\ell(x)=h^{(K)}$
is substituted back into the forward pass
at only layer~$\ell$ and position~$T+1$ for the next
token prediction; all other layers and decoding steps remain unchanged.

\section{Text editing with Lookahead HDMI}
\label{sec:HDMI6}
In this section, we show a use–case of HDMI in which for a given input sequence  
\(
x_{1:{T_\textit{in}}}=(x_1,\dots,x_{T_\textit{in}})
\), a \emph{user}
provides an edited sequence 
$\tilde{x}_{1:{T_\textit{ed}}}=(\tilde x_1,\dots,\tilde x_{T_\textit{ed}})$. Let $T=\min({T_\textit{in}},{T_\textit{ed}})$. The goal is to change the input sequence toward the edited version fluently. 
Rather than forcing the LM to reproduce $\tilde{x}_i$s, we
\emph{steer} the hidden states of the input sequence so that the
generated sequence remains fluent while
implementing the user’s changes.
For every decoding step $t+1\in\{1,\dots,T\}$, we denote the last-token layer-$L$ hidden $h_{L}(x_{1:t})$ by $h_t\in\mathbb R^D$, the temperature by $\beta_g$, the next-token logit $\phi(x_{1:t})$ by $\phi_t$ and the next-token distribution by $y_t$. Then,
\[
\phi_t \;=\; W_{\!U}h_t + b,
\qquad
y_t \;=\; \mathrm{softmax}\!\bigl(\phi_t / \beta_g\bigr).
\]

Since we considered an autoregressive LM, a hidden state $h_t$, is not aware of future edits. To let the $h_t$ be influenced by \emph{future} edit positions ($t'>t$), we must consider the transition from $h_t$ to $h_{t+1}$ through the expected embedding $m_t$ as follows:
\[
h_t \;\rightarrow\; 
\phi_t=\; W_{\!U}h_t + b \;\xrightarrow\; 
y_t \;=\; softmax\!\bigl(\phi_t/\beta_g\bigr)
\]
\[\;\rightarrow\;
m_t \;=\; E^{\!\top}y_t
\;\rightarrow\;
h_{t+1}=\Trsoft(m_t)
\]
where $E\in\mathbb{R}^{|\mathcal V|\times d_{e}}$ is the token embedding matrix, $d_{e}$ is the embedding size, and $m_t\in\mathbb{R}^{d_e}$ is the expected value of the next-token prediction based on the distribution $y_t$. 
\(\Trsoft\) denotes the decoder one-step
transition that maps the expected next-token embedding to the next last-layer hidden state. Note that \(\Trsoft\) maintains a cache of all past tokens' attention states, which we have omitted here for brevity.  In this formulation, gradients of a margin objective (such as equation \ref{eq:HDMI-margin}) can be backpropagated via a vector–Jacobian product (VJP) through
the softmax–expected‑embedding–transition chain to the hidden states of the past decoding steps ($t'<t)$.  Since we do a forward pass with $m_t$, no gradient flows
through argmax/sampling, and therefore, the transitions remain differentiable.
Note that autoregressive LLMs were never trained to consume combinations of expected embeddings; hence, feeding expected embeddings in their token generation process (forward pass) puts the model off‑manifold and quickly collapses fluency--even without intervening on hidden states.
To preserve fluency while still obtaining useful gradients, we decouple the forward token generation process from the gradient backpropagation path. At each decoding step, we choose a very low‑temperature $\beta_f$ in softmax and feed that generated embedding into the model, to mimic exactly the standard inference. In parallel, to compute the gradient with backpropagation, starting from the current hidden state, we build a differentiable ``lookahead'' objective by using the expected embedding under a high‑temperature value ($\beta_g$ near 1). In the following, we define the objective function for text editing.

Denote the set of positions that must change (specified by user edits) by
$
\mathcal M \;=\;
\bigl\{\,j\in\{1,\dots,T\}\!:\;x_j\neq\tilde x_j\bigr\},
$
and let $(a_j,b_j)=\bigl(\mathrm{ID}(x_j),\mathrm{ID}(\tilde x_j)\bigr)$ be their respective indices.  
Before predicting $x_{t+1}$, we build an objective that considers the edits up to $S_{\max}$ steps ahead:
\begin{align}
J_t(h_t)
\;=\;
\sum_{s=1}^{S_{\max}} &\,
\mathbf 1_{\{t+s\in\mathcal M\}}
\bigl[\phi(x_{1:t+s-1})_{b_{t+s}} \\
&- \phi(x_{1:t+s-1})_{a_{t+s}}\bigr].
\end{align}
Maximizing this cumulative margin objective adjusts $h_t$ so that the probability of future target tokens increases while decreasing that of the source ones.
To prevent deviating from the input sequence in the not edited parts and follow the input sequence logit, we also incorporate the source token logit at position $t+1$ and regulate it using a coefficient $\lambda_{\text{fact}}$:
\[
J_t \;\leftarrow\;
J_t + \lambda_{\text{fact}}\;\phi\bigl(x_{1:t}\bigr)_{a_{t+1}},
\qquad
\lambda_{\text{fact}}\in[0,1].
\]

Starting from $h_t^{(0)}=h_t$, we perform $K$ steps of gradient ascent
\[
g_t^{(k)}=\nabla_{h_t^{(k)}}J_t,
\quad
h_t^{(k+1)}=h_t^{(k)}+\alpha\, g_t^{(k)},
\]
where $\alpha$ is the step size.  
After $K$ inner steps we obtain $h'_t=h_t^{(K)}$, and form the following distribution using a low temperature $\beta_f$
\[
y'_t=\mathrm{softmax}\!\bigl(W_{\!U}h'_t/\beta_f\bigr).
\]
We show the next token by taking the argmax of $y'_t$, $x_{t+1}^\star=\arg\max y'_t$ (just to show); ~however, we continue the forward pass with the \emph{expected embedding} $m'_t = E^\top y'_t$.

\section{Experiments}
\label{sec:exp-setup}

\subsection{Causal Probing with HDMI}
\label{sec:exp-cprob}
We study pretrained decoder–only LMs with $L$ transformer layers. The complete code is provided in the
supplementary material.
Consider the input sequence $x_{1:T}$
whose next–token distribution we want to predict. Unless noted otherwise, we intervene at the final layer $\ell\!=\!L$. 
In this section, we often denote $h_{\ell}(x_{1:T})$ by $h_\ell$ for sake of brevity.


\paragraph{Datasets.}
We evaluate our approach on two complementary sources.

\textbf{LGD agreement corpus.}  
  We follow the protocol of \citet{canby2024reliable}: natural Wikipedia sentences from the original LGD dataset \citep{linzen2016assessing} are filtered so that both singular and plural inflections of the target verb are in the dataset.  
  Each dataset sample supplies (i) an input sequence ending right before the next-token prediction (the model predicts the verb denoted with [MASK] in the prompt), (ii) mutually exclusive labels $\langle v_{\text{sg}},v_{\text{pl}}\rangle$, that we choose $v_a$ and $v_b$ from (e.g., \textit{locks} and \textit{lock} respectively), and (iii) annotations for the subject number $Z_c$, which is either singular (SG) or plural (PL) and, when present, the number of the most recent non-subject noun in a prepositional phrase ($Z_e$). We map $Z_c\!\in\!\{\textsc{sg},\textsc{pl}\}$ to $\{0,1\}$ and {$Z_e$} to a 3–class label $\{0\!=\!\varnothing,1\!=\!\textsc{sg},2\!=\!\textsc{pl}\}$.

\textbf{CausalGym.}
CausalGym \citep{arora2402causalgym} groups examples into \emph{suites}, each targeting a single grammatical phenomenon (e.g., agreement with prepositional phrase (PP) distractors, subordination, clefting, filler–gap).  
Items are provided as \emph{minimal pairs} $\langle x_{\text{src}} , x_{\text{cf}}\rangle$, and their corresponding labels $\langle y_{\text{src}}, y_{\text{cf}}\rangle$ that are identical except for the property of interest $Z_c$, ensuring that any difference in model preference can be attributed to this property. For instance, $\langle x_{\text{src}} , x_{\text{cf}}\rangle$=$\langle \textit{John walked because [MASK]} , \textit{Jane walked } \\\textit{because [MASK]}\rangle $ and $\langle y_{\text{src}}, y_{\text{cf}}\rangle$= $\langle \textit{he, she}\rangle $.
From each pair, we create two samples by swapping which label is active; hence, the source/target tokens (defined in Section \ref{sec:HDMI-loss}) are well defined. For example, for the prompt \textit{John walked because [MASK]}, the source token $v_a$ is \textit{he} and the target one $v_b$ is \textit{she}. 
We take the varied feature as $Z_c$ (in this example ``gender'') and similar to LGD dataset, we map $Z_c\!\in\!\{\textit{male},\textit{female}\}$ to $\{0,1\}$. For the environment property $Z_e$ we use a preposition–family heuristic over the prompt near the verb: 
$Z_e \in \{0,1,2,3,4\} \equiv \{\textsc{none},\textsc{of},\textsc{in},\textsc{with/by},\textsc{other}\},
$
obtained by scanning the last 12 tokens for the most recent preposition.

\begin{table*}[htp!]
\centering
\small
\caption{CausalGym and LGD: Completeness, Selectivity, and Reliability for HDMI (ours), AlterRep, FGSM, and PGD on Meta-Llama-3-8B-Instruct and EleutherAI/pythia-70m. Higher value is better. C. , S., and R. are short for Completeness, Selectivity, and Reliability, respectively.}
\label{tab:combined_results_4methods}
\begin{tabular}{llcccccc}
\toprule
Task & Method & \multicolumn{3}{c}{Meta-Llama-3-8B-Instruct} & \multicolumn{3}{c}{EleutherAI/pythia-70m} \\
 &  & C. & S. & R. & C. & S. & R. \\
\midrule

\shortstack[l]{\texttt{agr\_sv\_num\_}\\\texttt{obj-relc}} & HDMI     & 1.0000 & 1.0000 & 1.0000 & 1.0000 & 1.0000 & 1.0000 \\
                                & AlterRep & 0.9800 & 1.0000 & 0.9899 & 0.9600 & 1.0000 & 0.9796 \\
                                & FGSM     & 1.0000 & 1.0000 & 1.0000 & 1.0000 & 1.0000 & 1.0000 \\
                                & PGD      & 0.6307 & 1.0000 & 0.7736 & 1.0000 & 1.0000 & 1.0000 \\
\midrule
\shortstack[l]{\texttt{agr\_sv\_} \\\texttt{num\_pp}}      & HDMI     & 1.0000 & 0.9362 & 0.9671 & 0.9984 & 0.9409 & 0.9688 \\
                                & AlterRep & 0.9500 & 0.8680 & 0.9072 & 0.5000 & 0.5108 & 0.5053 \\
                                & FGSM     & 1.0000 & 0.8314 & 0.9080 & 0.9984 & 0.4906 & 0.6579 \\
                                & PGD      & 0.3650 & 0.3648 & 0.3649 & 1.0000 & 0.8550 & 0.9218 \\
\midrule
\shortstack[l]{\texttt{agr\_refl\_num}\\\texttt{\_subj-relc}} & HDMI  & 1.0000 & 1.0000 & 1.0000 & 1.0000 & 1.0000 & 1.0000 \\
                                   & AlterRep & 0.9898 & 1.0000 & 0.9949 & 0.5000 & 1.0000 & 0.6667 \\
                                   & FGSM     & 1.0000 & 1.0000 & 1.0000 & 0.4963 & 1.0000 & 0.6634 \\
                                   & PGD      & 0.9636 & 1.0000 & 0.9815 & 0.4912 & 1.0000 & 0.6588 \\
\midrule
\shortstack[l]{\texttt{agr\_refl\_}\\\texttt{num\_pp}}    & HDMI     & 0.9902 & 0.8272 & 0.9014 & 1.0000 & 0.8983 & 0.9464 \\
                                & AlterRep & 0.9900 & 0.6540 & 0.7877 & 0.5449 & 0.8843 & 0.6743 \\
                                & FGSM     & 0.5341 & 0.9881 & 0.6934 & 0.9642 & 0.9206 & 0.9419 \\
                                & PGD      & 0.3335 & 0.5155 & 0.4050 & 0.6142 & 1.0000 & 0.7610 \\
\midrule
\shortstack[l]{\texttt{gss\_subord\_}\\\texttt{subj-relc}} & HDMI     & 1.0000 & 0.8969 & 0.9456 & 1.0000 & 1.0000 & 1.0000 \\
                                & AlterRep & 0.9700 & 0.8938 & 0.9303 & 0.6095 & 1.0000 & 0.7573 \\
                                & FGSM     & 0.5500 & 1.0000 & 0.7097 & 0.1507 & 1.0000 & 0.2619 \\
                                & PGD      & 0.6411 & 1.0000 & 0.7813 & 0.3463 & 1.0000 & 0.5145 \\
\midrule
\shortstack[l]{\texttt{gss\_subord\_}\\\texttt{pp}}        & HDMI     & 1.0000 & 0.8969 & 0.9456 & 0.9984 & 0.7371 & 0.8481 \\
                                & AlterRep & 0.9700 & 0.8938 & 0.9303 & 0.6896 & 0.7051 & 0.6972 \\
                                & FGSM     & 1.0000 & 0.9382 & 0.9681 & 0.1333 & 0.9799 & 0.2347 \\
                                & PGD      & 0.9950 & 0.5298 & 0.6914 & 0.3015 & 0.6628 & 0.4144 \\
\midrule
\texttt{cleft}                  & HDMI     & 1.0000 & 0.9800 & 0.9899 & 1.0000 & 1.0000 & 1.0000 \\
                                & AlterRep & 0.8603 & 0.9800 & 0.9163 & 0.7300 & 1.0000 & 0.8440 \\
                                & FGSM     & 1.0000 & 1.0000 & 1.0000 & 0.7900 & 1.0000 & 0.8827 \\
                                & PGD      & 0.9946 & 1.0000 & 0.9973 & 0.9599 & 1.0000 & 0.9796 \\
\midrule
\shortstack[l]{\texttt{filler\_gap\_}\\\texttt{hierarchy}} & HDMI     & 1.0000 & 0.8414 & 0.9138 & 1.0000 & 1.0000 & 1.0000 \\
                                & AlterRep & 0.9900 & 0.8443 & 0.9114 & 0.8299 & 1.0000 & 0.9070 \\
                                & FGSM     & 1.0000 & 1.0000 & 1.0000 & 1.0000 & 1.0000 & 1.0000 \\
                                & PGD      & 0.8300 & 1.0000 & 0.9071 & 1.0000 & 1.0000 & 1.0000 \\
\midrule
\texttt{filler\_gap\_pp}        & HDMI     & 0.9710 & 0.4412 & 0.6067 & 0.7404 & 0.6275 & 0.6793 \\
                                & AlterRep & 0.8095 & 0.7389 & 0.7726 & 0.5100 & 0.4582 & 0.4782 \\
                                & FGSM     & 0.9764 & 0.0501 & 0.0954 & 0.4453 & 0.9951 & 0.6153 \\
                                & PGD      & 0.8480 & 0.2290 & 0.3606 & 0.4561 & 0.9310 & 0.6122 \\
\midrule
\shortstack[l]{\texttt{filler\_gap\_}\\\texttt{subj}}      & HDMI     & 1.0000 & 0.4651 & 0.6349 & 1.0000 & 1.0000 & 1.0000 \\
                                & AlterRep & 1.0000 & 0.5200 & 0.6842 & 1.0000 & 0.5200 & 0.6842 \\
                                & FGSM     & 0.9200 & 1.0000 & 0.9583 & 0.3435 & 1.0000 & 0.5114 \\
                                & PGD      & 0.9050 & 1.0000 & 0.9501 & 0.1787 & 1.0000 & 0.3033 \\
\midrule
\texttt{LGD}                    & HDMI     & 0.9412 & 0.8117 & 0.8716 & 0.9341 & 0.8538 & 0.8921 \\
                                & AlterRep & 0.9490 & 0.6536 & 0.7741 & 0.9951 & 0.3234 & 0.4881 \\
                                & FGSM     & 0.5813 & 0.3337 & 0.4240 & 0.4393 & 0.9959 & 0.6097 \\
                                & PGD      & 0.5402 & 0.4149 & 0.4694 & 0.7124 &  0.532 & 0.6091 \\
\bottomrule
\end{tabular}
\end{table*}

A validation probe \citep{canby2024reliable} is a lightweight classifier trained to \emph{decode} a latent property from hidden states, without participating in the intervention itself. 
Given layer–$\ell$ representations $h_\ell(x)$, we train two probes on a disjoint split:
(i) $\vZc$, which estimates $P(Z_c\!\mid\!h_\ell)$ and is used to quantify \textbf{completeness} by checking whether an intervention shifts the encoding toward the counterfactual value; and 
(ii) $\vZe$, which estimates $P(Z_e\!\mid\!h_\ell)$ and is used to quantify \textbf{selectivity} by checking that non‑target properties remain stable. 
Importantly, validation probes are never trained on intervened representations and never see the test set; their role is to read out what the model encodes, not to learn to cope with interventions.

The data is partitioned into (1) an \emph{interventional} split, (2) a \emph{validation‑probe} split, and (3) a held‑out \emph{test} split for two reasons. 
First, to avoid leakage: the interventional split is used to fit any probe-driven intervention (e.g., a $Z_c$ probe for gradient‑based baselines) and to tune intervention hyper‑parameters, while validation probes are fit \emph{only} on the validation‑probe split. 
Second, to obtain unbiased metrics: we subsample the validation‑probe split so that $Z_c$ and $Z_e$ are approximately independent \citep{canby2024reliable}, ensuring $\vZc$ cannot spuriously use $Z_e$ (and vice versa). 
The test split is used exclusively for final reporting of completeness, selectivity, and reliability after all training and tuning are performed.

We compared our method (HDMI) with several baselines:
 \textbf{HDMI (ours).} Gradient ascent on the next–token margin (~\eqref{eq:HDMI-margin}), i.e.\ increase $\phi(x)_\tau$ while  decreasing $\phi(x)_\sigma$. In Appendix \ref{sec:ablation-target-only}, there is an ablation study on only increasing $\phi(x)_\tau$.
 \textbf{GBI.} Gradient–based counterfactual intervention via FGSM/PGD \citep{goodfellow2015explaining,madry2018towards} against the interventional $Z_c$ probe, targeted to predict a counterfactual value of $Z_c$  within an $\ell_\infty$ or $\ell_2$ ball (default: $\ell_\infty$).
 \textbf{AlterRep \citet{ravfogel2021counterfactual}.} It modifies only the component of the representation $h_\ell$ that lies in the row space of a linear concept classifier (the span of its weight vectors) and leaves the part orthogonal to that space unchanged.



\begin{figure}[htp]
  \centering
    \centering
    \includegraphics[width=0.99\linewidth]{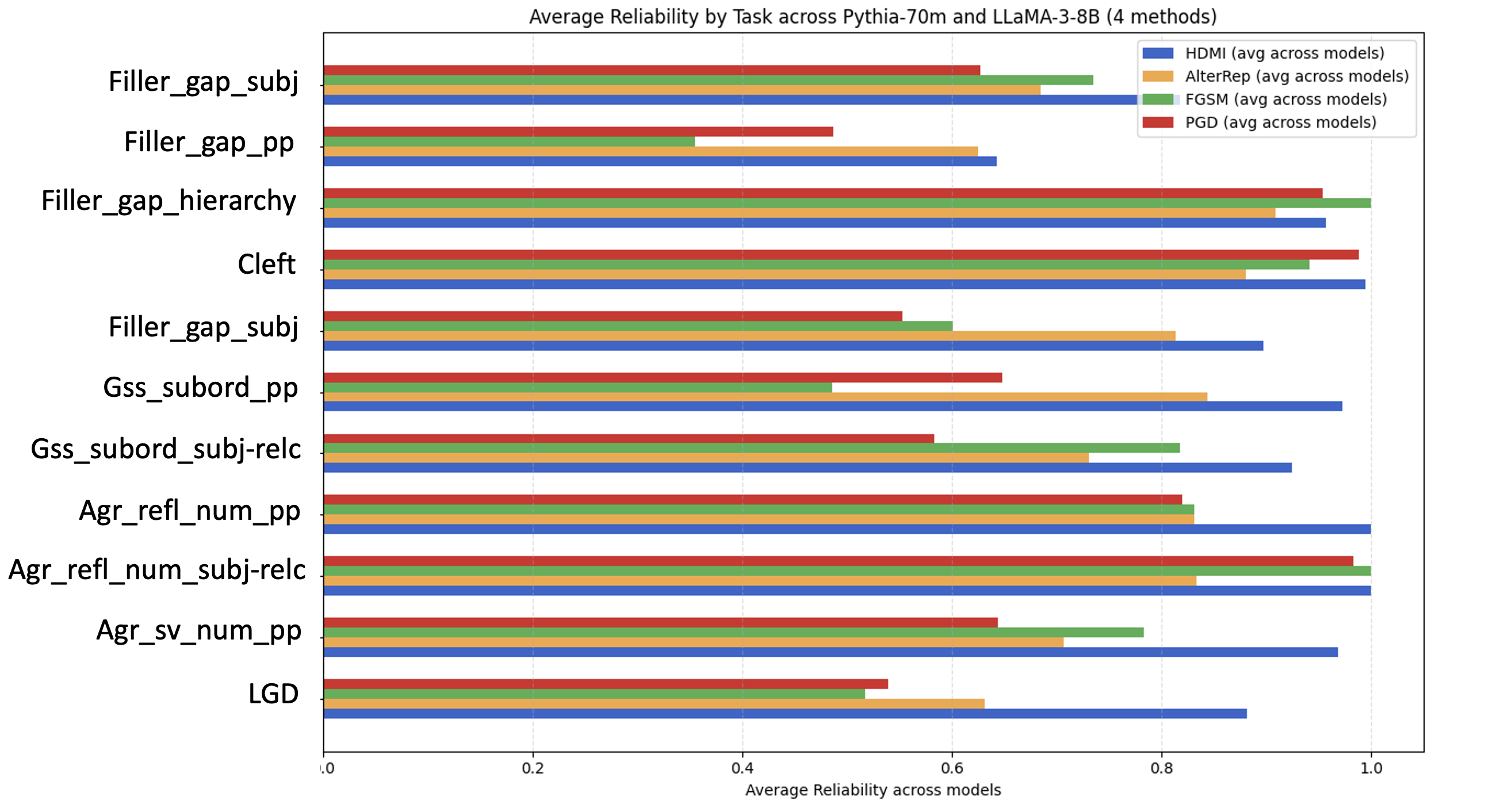}
    \caption{Reliability by tasks. Reliability is averaged across LLaMA and Pythia-70M models. }
    \label{fig:total_reliability}
  \label{fig:pythia70m_overview}
\end{figure}
Following \citet{canby2024reliable}, we report:

 \textbf{Completeness.} Let $p_c^{\text{after}}=\mathrm{softmax}(\vZc(\tilde h_\ell(x)))$, where the softmax is taken over $|\Kc|$ classes.
The desired distribution after applying the interventional operator $f_\theta\!\bigl(h_\ell(x),\,z\!\rightarrow\!z'\bigr)$ is the one–hot $e_{z'} \in \{0,1\}^{|\Kc|}$, where $ \bigl[e_{z'}\bigr]_k=1$ if $k=z'$, and zero otherwise; we define
\[
\mathrm{Comp} \;=\; 1 - d_{\mathrm{TV}}\!\bigl(p_c^{\text{after}},\, e_{z'}\bigr),
\qquad
d_{\mathrm{TV}}(p,q)=\tfrac12\|p-q\|_1,
\]

where $d_{\mathrm{TV}}(p,q)$ is the total variation (TV) distance between distributions $p$ and $q$.

 \textbf{Selectivity.} We measure selectivity with TV distance between the post- and pre-intervention
validation-probe distributions and normalize by the maximum possible
TV shift, which is defined as $m(p)$ below. Let $p_e^{\text{before}}=\mathrm{softmax}(\vZe(h_\ell(x)))$ and $p_e^{\text{after}}=\mathrm{softmax}(\vZe(\tilde h_\ell(x)))$.
We report the selectivity score
\[
\mathrm{Sel}
\;=\;
1 - \frac{d_{\mathrm{TV}}(p_e^{\text{after}},\,p_e^{\text{before}})}{m(p_e^{\text{before}})},\]
\[
m(p)=\max\!\bigl\{1-\min_i p_i,\;\max_i p_i\bigr\}.
\]


\textbf{Reliability.} The harmonic mean
$\mathrm{Rel}
\;=\;
\frac{2\,\mathrm{Comp}\times \mathrm{Sel}}{\mathrm{Comp}+\mathrm{Sel}}.$



For each split, we hold out a random $20\%$ subset of that split as internal validation to report the probe accuracy. The probe accuracy for our experiments in different tasks is above 90\% except for some tasks on
the Pythia-70M model, where the accuracy of the interventional $Z_c$ probe (evaluated on its holdout within the interventional split) dropped to approximately 70\%, which made AlterRep/FGSM/PGD updates unstable. To stabilize these baselines, we adjusted the train/validation ratio within the interventional split, allocating a larger share to probe training and a smaller share to its internal validation, which improved probe accuracy. It is noteworthy to emphasize that  HDMI is probe-free and does not rely on an interventional probe at all; consequently, its performance is independent of the size of the interventional split and of interventional probe accuracy.

Table \ref{tab:combined_results_4methods} (on LLaMA‑3‑8B‑Instruct and Pythia‑70M) reports Completeness, Selectivity, and Reliability across LGD and the CausalGym suites for four methods: HDMI (ours), AlterRep, FGSM, and PGD. More experiments are provided in Appendix \ref{sec:extra_exp}. Figure~\ref{fig:total_reliability} summarizes the average Reliability across models for all four methods.


In agreement suites (\texttt{agr\_*}), HDMI is consistently strong. It achieves perfect Completeness and higher Reliability except in \texttt{agr\_refl\_num\_pp} suite, where FGSM outperforms in Selectivity. On subordination and clefting, HDMI remains competitive or better in Reliability. 
On LLaMA's subordination suites, FGSM is often strong and PGD's performance is mixed; both drop substantially on Pythia‑70M.
HDMI remains consistent across models.
Filler–gap remains the most challenging family. On \texttt{filler\_gap\_pp} (LLaMA), AlterRep surpasses HDMI, while FGSM/PGD underperform markedly. In contrast, on Pythia‑70M \texttt{filler\_gap\_pp}, HDMI is stronger than AlterRep, and FGSM/PGD performs slightly better. Reliability of HDMI is consistent across models, while others show mixed Reliability and are sensitive to the quality of the \emph{interventional} $Z_c$ probe they target. 

Taken together, the results indicate that a simple, per instance margin ascent (\eqref{eq:HDMI-margin}) provides a strong performance-high Completeness with competitive Selectivity-translating into higher Reliability on most tasks and across models (Figure~\ref{fig:total_reliability}). The remaining gaps, notably in \texttt{filler\_gap\_pp} on LLaMA, suggest promising directions for the multi‑token objective (\eqref{eq:HDMI-set-margin}), finer layer selection, and step‑size scheduling to further improve Selectivity without sacrificing Completeness.

\subsection{Lookahead HDMI}
In this section, we show text-editing examples with LA-HDMI.
Figure \ref{fig:ctf}(a) illustrates LA-HDMI editing with two simultaneous token substitutions (\emph{girl}$\!\rightarrow\!$\textbf{\emph{owl}}, \emph{stars}$\!\rightarrow\!$\textbf{\emph{sun}}). HDMI applies a next-step, head-aware margin ascent at each decoding step with the expected embedding, allowing gradients to ``look ahead'' through the softmax–embedding–transition path (Sec.~\ref{sec:HDMI6}). In this example, lookahead successfully steers the model to realize the user’s edits while retaining fluency, including the necessary local contextual adjustments (e.g., the preceding article for \textit{owl}) without over-editing the rest of the sentence. Figure \ref{fig:ctf}(b) shows another example where the user asked for change \emph{was}$\!\rightarrow\!$\textbf{\emph{were}}. LA-HDMI performs lookahead steering before generating \emph{he} and puts \emph{we} instead, which is grammatically correct.

In general, we observe that this lookahead mechanism is sensitive to hyperparameters such as the horizon $S_{\max}$, temperatures $\beta_g$ and $\beta_f$, step size $\alpha$, and regularization coefficient $\lambda_{\text{fact}}$. When the token that must be adapted is not close to the edited token---for example a determiner, auxiliary, or other agreement carrier several positions away---the product of Jacobians along the expected-embedding path can attenuate the signal, and the gradient may vanish before reaching the earlier position. In such cases, careful fine-tuning (e.g., modestly increasing $S_{\max}$, using a slightly higher $\beta_g$ to reduce peaky distributions, or scheduling $\alpha$) tends to improve edit realization without sacrificing fluency.
There are some cases where the lookahead mechanism fails. This is an interesting direction for future work. Please refer to Appendix \ref{sec:extra_exp} for more details. {
Note that the numeric evaluation of HDMI applies for LA‑HDMI as well since LA-HDMI is locally applying HDMI over a sequence of tokens. Showing ``quantitative'' evaluation over the full generated sequence for LA-HDMI is inherently dependent on the input sequence because the user deliberately chooses how far they want the generated sequence be deviated from the proposed edits (to preserve coherency, being natural or fictional, etc) by adjusting the hyperparameters $\lambda_{\text{fact}}$ and $\alpha$. Therefore, there is a tradeoff between how coherent the text is and how strictly the user wants to apply the edits. Moreover,  currently, no standardized dataset for user-specified multi‑token edits with ground‑truth labels exists; building such a benchmark is valuable but out of scope of this work. 
    }

\begin{figure}[t]
    \centering
        \begin{subfigure}[t]{0.48\textwidth}
        \centering
        \includegraphics[width=\linewidth]{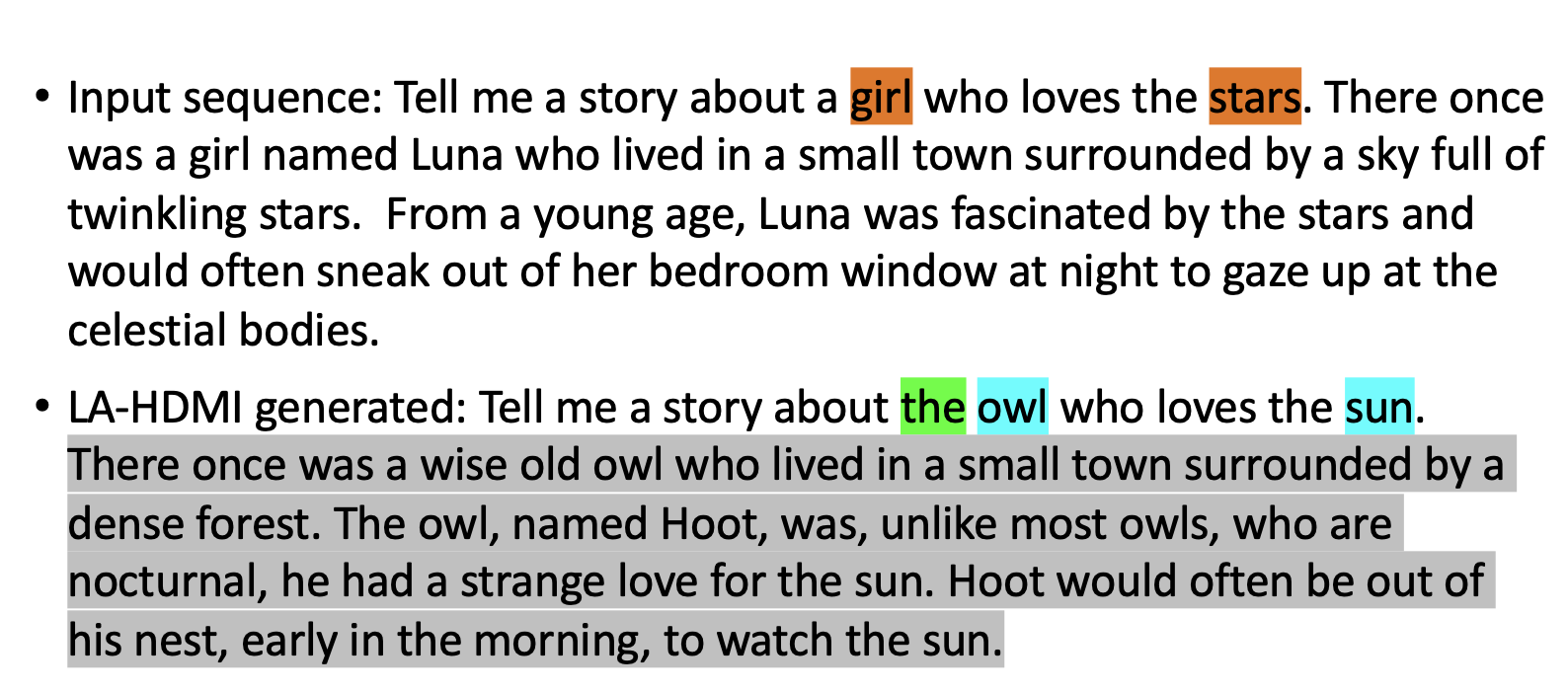}
        \caption{\textbf{}
        The input text
        (top) contains the source words \emph{girl} and \emph{stars}.
        The user supplies inline edits, striking out the source
        tokens and inserting \textbf{\emph{owl}} and \textbf{\emph{sun}}.
        LA-HDMI steers the hidden state of not only these words but also the words before to produce a fluent bottom sentence. Hence
        the model generates \textbf{\emph{the}} instead of \textbf{\emph{a}} while keeping the rest of the wording consistent.}
        \label{fig:ctf-b}
    \end{subfigure}
    \hfill
    \begin{subfigure}[t]{0.48\textwidth}
        \centering
        \includegraphics[width=\linewidth]{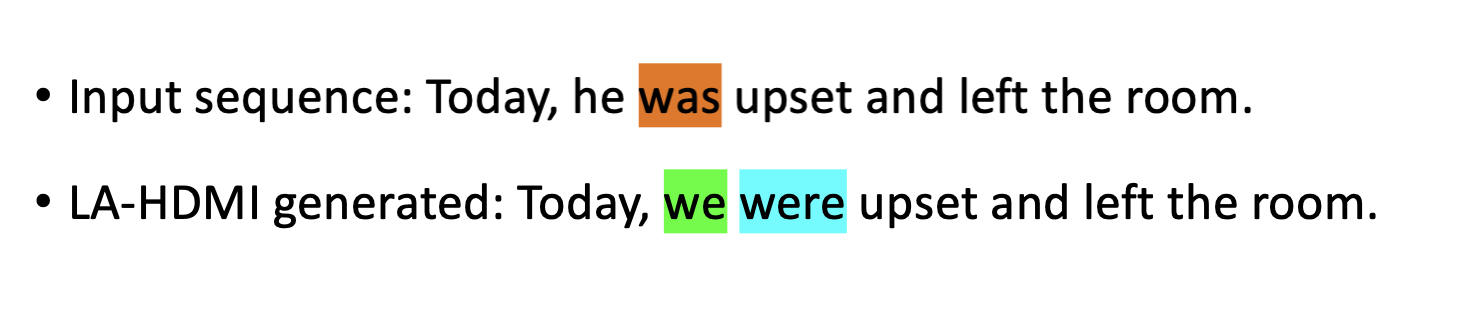}
        \caption{\textbf{}
        The input text (top) contains the source word
        \emph{was}. The user supplies inline edits with
        target word \textbf{\emph{were}}.
        LA-HDMI performs lookahead steering of the hidden state, hence
        the model generates \textbf{\emph{we}} beforehand.}
        \label{fig:ctf-a}
    \end{subfigure}

    \caption{HDMI editing examples. Left (a) and right (b) show two
    different edit realizations.}
    \label{fig:ctf}
    \end{figure}

\section{Conclusion}
We introduced HDMI, a probe-free, inference-time causal probing method that modifies hidden states via a simple logit-margin, and LA-HDMI for fluent, lookahead text editing. HDMI attains high completeness and selectivity, yielding strong reliability versus probe-driven baselines. Limitations include the sensitivity of hyperparameters and attenuation of lookahead gradients in text editing. Future work includes adaptive layer selection, step-size scheduling, and broader applications of the multi-token objective.






\bibliography{main}

\newpage

\onecolumn

\title{Supplementary Material}
\maketitle
\appendix
\section{More experiments}
\label{sec:extra_exp}

\subsection{more causalgym tasks}
More experiments on causalgym tasks are provided below:
On \texttt{agr\_gender}, all methods are near ceiling on both models, with HDMI matching the best. On \texttt{cleft} and \texttt{filler\_gap\_hierarchy}, methods are generally at or near ceiling; HDMI ties or leads in most cases.
\begin{table*}[htp!]
\centering
\small
\caption{CausalGym and LGD: Completeness, Selectivity, and Reliability for HDMI (ours), AlterRep, FGSM, and PGD on Meta-Llama-3-8B-Instruct and EleutherAI/pythia-70m. Higher is better.}
\label{tab:extra_combined_results_4methods}
\begin{tabular}{llcccccc}
\toprule
Task & Method & \multicolumn{3}{c}{Meta-Llama-3-8B-Instruct} & \multicolumn{3}{c}{EleutherAI/pythia-70m} \\
 &  & Completeness & Selectivity & Reliability & Completeness & Selectivity & Reliability \\
\midrule

\shortstack[l]{\texttt{agr\_sv\_num\_}\\\texttt{obj-relc}} & HDMI     & 1.0000 & 1.0000 & 1.0000 & 1.0000 & 1.0000 & 1.0000 \\
                                & AlterRep & 0.9800 & 1.0000 & 0.9899 & 0.9600 & 1.0000 & 0.9796 \\
                                & FGSM     & 1.0000 & 1.0000 & 1.0000 & 1.0000 & 1.0000 & 1.0000 \\
                                & PGD      & 0.6307 & 1.0000 & 0.7736 & 1.0000 & 1.0000 & 1.0000 \\
\midrule
\texttt{agr\_gender}            & HDMI     & 1.0000 & 1.0000 & 1.0000 & 1.0000 & 1.0000 & 1.0000 \\
                                & AlterRep & 1.0000 & 1.0000 & 1.0000 & 0.5000 & 1.0000 & 0.6667 \\
                                & FGSM     & 1.0000 & 1.0000 & 1.0000 & 1.0000 & 1.0000 & 1.0000 \\
                                & PGD      & 0.9350 & 1.0000 & 0.9664 & 1.0000 & 1.0000 & 1.0000 \\
\midrule

\shortstack[l]{\texttt{npi\_any}\\\texttt{\_subj-relc}}    & HDMI     & 1.0000 & 1.0000 & 1.0000 & 1.0000 & 1.0000 & 1.0000 \\
                                & AlterRep & 1.0000 & 1.0000 & 1.0000 & 0.5000 & 1.0000 & 0.6667 \\
                                & FGSM     & 1.0000 & 1.0000 & 1.0000 & 0.5000 & 1.0000 & 0.6667 \\
                                & PGD      & 0.9600 & 1.0000 & 0.9796 & 0.4999 & 1.0000 & 0.6666 \\
\midrule

\texttt{cleft}                  & HDMI     & 1.0000 & 0.9800 & 0.9899 & 1.0000 & 1.0000 & 1.0000 \\
                                & AlterRep & 0.8603 & 0.9800 & 0.9163 & 0.7300 & 1.0000 & 0.8440 \\
                                & FGSM     & 1.0000 & 1.0000 & 1.0000 & 0.7900 & 1.0000 & 0.8827 \\
                                & PGD      & 0.9946 & 1.0000 & 0.9973 & 0.9599 & 1.0000 & 0.9796 \\
\midrule
\shortstack[l]{\texttt{filler\_gap\_}\\\texttt{hierarchy}} & HDMI     & 1.0000 & 0.8414 & 0.9138 & 1.0000 & 1.0000 & 1.0000 \\
                                & AlterRep & 0.9900 & 0.8443 & 0.9114 & 0.8299 & 1.0000 & 0.9070 \\
                                & FGSM     & 1.0000 & 1.0000 & 1.0000 & 1.0000 & 1.0000 & 1.0000 \\
                                & PGD      & 0.8300 & 1.0000 & 0.9071 & 1.0000 & 1.0000 & 1.0000 \\
\midrule

\texttt{LGD}                    & HDMI     & 0.9412 & 0.8117 & 0.8716 & 0.9341 & 0.8538 & 0.8921 \\
                                & AlterRep & 0.9490 & 0.6536 & 0.7741 & 0.9951 & 0.3234 & 0.4881 \\
                                & FGSM     & 0.5813 & 0.3337 & 0.4240 & 0.4393 & 0.9959 & 0.6097 \\
                                & PGD      & 0.5402 & 0.4149 & 0.4694 & 0.5402 & 0.4149 & 0.4694 \\
\bottomrule
\end{tabular}
\end{table*}

\subsection{Ablation: Removing the Margin Term (Target-only Objective)}
\label{sec:ablation-target-only}

We hypothesize that the margin objective in ~\eqref{eq:HDMI-margin} is critical for reliably flipping the targeted property. To test this, we removed the source target term and optimized a \emph{target-only} objective that promotes the logit of the target token but does not explicitly demote the source token:
\[
\mathcal{L}_{\text{target-only}}(x) \;=\; \phi(x)_{\tau}
\quad\text{vs.}\quad
\mathcal{L}_{\text{margin}}(x) \;=\; \phi(x)_{\tau} - \phi(x)_{\sigma}.
\]
We evaluated on the LGD corpus with Meta\mbox{-}Llama\mbox{-}3\mbox{-}8B\mbox{-}Instruct. The results are summarized below and compared against the original HDMI numbers reported in Table~\ref{tab:combined_results_4methods}.

\begin{table}[t!]
\centering
\small
\caption{LGD (Meta\mbox{-}Llama\mbox{-}3\mbox{-}8B\mbox{-}Instruct) ablation of the HDMI objective. The margin-based HDMI row reproduces the LGD numbers from Table~\ref{tab:combined_results_4methods}.}
\begin{tabular}{lccc}
\toprule
Objective & Completeness & Selectivity & Reliability \\
\midrule
Target-only ($\phi_\tau$) & 0.7407 & 0.8145 & 0.7758 \\
Margin (\eqref{eq:HDMI-margin}) & 0.9412 & 0.8117 & 0.8716 \\
\midrule
$\Delta$ (target-only $-$ margin) & $-0.2005$ & $+0.0028$ & $-0.0958$ \\
\bottomrule
\end{tabular}

\label{tab:ablation_margin}
\end{table}

\subsection{Case study when lookahead backpropagation fails}
Consider the following input, edited, and generated sequences.
\begin{itemize}
    \item \emph{Factual}: \textit{Today, the boy with the red shirt took his own bag and left the room.}\\
    \item \emph{Edited}: \textit{Today, the boy with the red shirt took her own bag and left the room.}\\
    \item \emph{Generated}: \textit{Today, the boy with the red shirt took her own bag and left the room. She was very upset and felt not want to talk to anyone.}
\end{itemize}

This example highlights both the strength and the pitfalls of our lookahead steering. On the positive side, HDMI realizes the intended local edit (\textit{his}$\!\rightarrow$\textit{her}) with high \emph{completeness}. In the example, the next sentence produces \textit{She}, indicating that the hidden state has been steered to encode the counterfactual gender value; hence, the \emph{targeted} causal property $Z_c$ (here: pronominal gender at the next relevant decoding step) has been changed completely. However, the subject remains \textit{the boy}, hence, look ahead and back propagating the gender change has failed here, and reduced fluency.

In practice, when the earlier word (subject here) that must adapt is not adjacent to the edit (e.g., a preceding determiner or auxiliary several tokens away), the chain of Jacobians along the expected-embedding path can attenuate, and the gradient signal to that earlier position can effectively vanish. We have observed this with articles and agreement carriers near noun edits: unless $S_{\max}$, step size, and inner steps are tuned, the previous tokens may fail to adapt (or the model may overcompensate elsewhere). This is a promising direction for future work.

\section{Hyperparameters}

For hyperparameter tuning, we performed a grid search, systematically exploring a predefined range of values for each parameter. In the following tables, we provide the fine-tuned parameters for each task.

Probes are linear or 1–hidden–layer MLPs (hidden size selected from $\{64,256,512\}$ by validation accuracy), optimized with AdamW for 100 epochs, weight decay $10^{-6}$, batch size 256.
To avoid leakage, the interventional $Z_c$ probe is trained strictly on the interventional split; $\vZc$/$\vZe$ are trained strictly on the validation–probe split.

\begin{table}[t]
\centering
\footnotesize
\setlength{\tabcolsep}{3pt}
\renewcommand{\arraystretch}{1.05}
\caption{Hyperparameters with the range of values used in the tasks.}
\label{tab:hparams_compact}
\begin{tabularx}{\linewidth}{l X}
\toprule
\textbf{Hyperparameter} & \textbf{Range} \\
\midrule
\texttt{hdmi\_alpha}                  & 1 \\
\texttt{hdmi\_inner\_steps}           & 30 \\
\texttt{alterrep\_alpha}              & \{0.1, 0.5\} \\
\texttt{alterrep\_inlp\_rank\_apply}  & 32 \\
\texttt{probe\_epochs}                & \{75, 100\} \\
\texttt{probe\_lr}                    & $1\times 10^{-2}$ \\
\texttt{probe\_weight\_decay}         & $1\times 10^{-6}$ \\
\texttt{probe\_batch\_size}           & 256 \\
\texttt{probe\_hidden}                & 256 \\
\texttt{inlp\_epochs}                 & \{50, 100\} \\
\texttt{inlp\_lr}                     & $10^{-2}$ \\
\texttt{pgd\_steps}                   & \{40, 50, 100\} \\
\texttt{inlp\_rank}                   & 32 \\
\texttt{gbi\_norm}                    & $\ell_\infty$ \\
\texttt{epsilon}                      & \{0.5, 1, 10\} \\
\bottomrule
\end{tabularx}
\end{table}
\section{System Configuration}

\subsection*{Host and OS}
\begin{itemize}
  \item OS: Ubuntu 22.04.4 LTS
  \item Kernel: Linux 6.8.0-59-generic (x86\_64)
\end{itemize}

\subsection*{Compute}
\begin{itemize}
  \item CPU: AMD EPYC 9454, 2 sockets, 48 cores/socket, 2 threads/core (192 logical)
  \item Memory: 1.5~TiB RAM
  \item GPU: NVIDIA H100 (80~GB HBM3)
\end{itemize}

\subsection*{Storage}
\begin{itemize}
  \item Local NVMe aggregate: \(\sim\)8.6--8.7~TB (XFS)
\end{itemize}

{
\section{More Related Work}

\paragraph{Relation to activation/representation steering.}
Causal mediation analysis aims to measure how a treatment effect is mediated by intermediate variables. \citet{vig2020investigating} uses causal mediation analysis to quantify how specific mediators (neurons/heads) transmit an effect (e.g., gender bias) from input to output; they measure the direct and indirect effects and analyze mechanisms. 
\citet{vig2020investigating} also uses interventions (editing the input or patching the internal representation) to measure causal effects cleanly and to pinpoint which internal components actually cause a behavior. Thus, it is more about using interventions to perform mediation analysis than using mediation analysis to intervene. Our work is the converse: we optimize a logit-margin, which is analogous to an “effect” (but is not the same as the definition of effects in \citet{vig2020investigating}), to intervene on hidden states for causal probing using gradient descent, and we extend this to do local edits in multiple decoding times using the softmax embedding in LA-HDMI.

Our core contribution is to intervene on the hidden state to change the linguistic properties encoded in that hidden state (locally, not steering the model behavior in general), and we evaluate it not with the output sequence of the model, but only with validation probes (section 6 of our paper) on the modified hidden state; we do not perform two forward passes and patch one hidden state into the other to steer the model output. We also do not permanently train/change model parameters as in ROME \citet{meng2022locating}.

\paragraph{Relation to Representation fine‑tuning (ReFT).} ReFT \citep{wu2404reft} is training time representation fine‑tuning that edits hidden representations at chosen layers/ positions. Unlike our method, it updates parameters (or small trainable modules) to change the model’s behavior persistently. So it solves a different problem (persistent model changes for knowledge editing).

\paragraph{Relation to FutureLens.}
FutureLens \citep{pal2023future} is a descriptive method: it asks what future tokens can already be read from a single hidden state, showing that one state can predict tokens several steps ahead. It focuses on describing what is present in the hidden representation, not performing gradient-based adjustments to the hidden state. It does not have contrastive loss like us, and solves a different problem. By contrast, our LA-HDMI uses the differentiable “expected-embedding” path to actively intervene on hidden states toward user‑specified future edit with a margin loss.

\section{Theoretical Analysis}
 Let $\mathcal{V}$ be the vocabulary and $V\triangleq|\mathcal{V}|$.
 For $W\in\mathbb{R}^{V\times D}$, let $w_i^\top$ denote row $i$ (so $w_i\in\mathbb{R}^D$).

\begin{theorem}[Why margin is the right objective for pairwise preference; target-only can fail]
\label{thm:margin_vs_target_l2}
Assume the final-layer logits are affine in the last hidden state $h\in\mathbb{R}^D$:
\[
z(h) = Wh + b \in \mathbb{R}^{V},
\]
where $W\in\mathbb{R}^{V\times D}$ and $b\in\mathbb{R}^{V}$.
Let $w_i^\top$ be row $i$ of $W$, so $z_i(h)=w_i^\top h + b_i$.

Fix two tokens: target $\tau$ and rival/source $\sigma$.
Define the margin
\[
m(h)\triangleq z_\tau(h)-z_\sigma(h) = (w_\tau-w_\sigma)^\top h + (b_\tau-b_\sigma),
\]
and let
\[
d \triangleq w_\tau - w_\sigma \in \mathbb{R}^D.
\]

Consider a norm-bounded intervention $\delta$ with $\|\delta\|_2\le \epsilon$ for some $\epsilon>0$.

Define:
\begin{align*}
\delta_{\mathrm{M}} &\in \arg\max_{\|\delta\|_2\le\epsilon} \; \bigl(z_\tau(h+\delta)-z_\sigma(h+\delta)\bigr)
\quad\text{(margin-optimal)},\\
\delta_{\mathrm{T}} &\in \arg\max_{\|\delta\|_2\le\epsilon} \; z_\tau(h+\delta)
\quad\text{(target-only optimal)}.
\end{align*}

Then:
\begin{enumerate*}
\item \textbf{Margin objective is exactly optimal for pairwise preference.}
If $d\neq 0$, the maximum possible margin improvement equals
$
m(h+\delta_{\mathrm{M}})-m(h) \;=\; \epsilon\|d\|_2,
$
achieved by
$
\delta_{\mathrm{M}}=\epsilon\frac{d}{\|d\|_2}.
$

\item \textbf{Target-only optimizes a different direction and can be arbitrarily worse.}
If $w_\tau\neq 0$, a target-only maximizer is
$
\delta_{\mathrm{T}}=\epsilon\frac{w_\tau}{\|w_\tau\|_2}.
$
The resulting margin improvement is
$
m(h+\delta_{\mathrm{T}})-m(h)
= d^\top \delta_{\mathrm{T}}
= \epsilon\|d\|_2\cos\theta,
$
where $\theta$ is the angle between $d$ and $w_\tau$, i.e.
$
\cos\theta \triangleq \frac{d^\top w_\tau}{\|d\|_2\|w_\tau\|_2}.
$
In particular, it can be \emph{negative} when $\cos\theta<0$.

\item \textbf{Concrete failure case (target-only decreases the margin).}
There exist $W$ such that $m(h+\delta_{\mathrm{T}})-m(h) < 0$ for all $\epsilon>0$.
For example, pick any nonzero $u\in\mathbb{R}^D$ and set
$
w_\tau=u,\qquad w_\sigma=2u.
$
Then $d=w_\tau-w_\sigma=-u$ and the (unique) target-only optimizer is
$\delta_{\mathrm{T}}=\epsilon\,u/\|u\|_2$, which yields
$
m(h+\delta_{\mathrm{T}})-m(h)=(-u)^\top\left(\epsilon\frac{u}{\|u\|_2}\right)=-\epsilon\|u\|_2<0.
$
\end{enumerate*}
\end{theorem}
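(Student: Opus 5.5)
The whole argument rests on one observation: the margin is affine in $h$, so its increment under a perturbation is linear in $\delta$ and does not depend on $h$. Writing $m(h)=d^\top h+(b_\tau-b_\sigma)$ gives
\[
m(h+\delta)-m(h)=d^\top\delta .
\]
Likewise $z_\tau(h+\delta)-z_\tau(h)=w_\tau^\top\delta$. Both optimization problems in Theorem~\ref{thm:margin_vs_target_l2} therefore reduce to maximizing a linear functional $c^\top\delta$ over the Euclidean ball $\|\delta\|_2\le\epsilon$.

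\textbf{Step 1 (the generic linear problem).} For a fixed $c\neq 0$, Cauchy--Schwarz gives $c^\top\delta\le\|c\|_2\|\delta\|_2\le\epsilon\|c\|_2$. The point $\delta=\epsilon c/\|c\|_2$ attains this bound, so it is a maximizer. It is also the unique one, because equality in Cauchy--Schwarz forces $\delta$ to be a nonnegative multiple of $c$, and equality in the norm bound forces $\|\delta\|_2=\epsilon$.

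\textbf{Step 2 (part 1).} Taking $c=d$ with $d\neq 0$ gives $\delta_{\mathrm M}=\epsilon d/\|d\|_2$ and maximal margin increase $\epsilon\|d\|_2$.

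\textbf{Step 3 (part 2).} Taking $c=w_\tau$ with $w_\tau\neq 0$ shows that $\delta_{\mathrm T}=\epsilon w_\tau/\|w_\tau\|_2$ is the (unique) target-only maximizer. Substituting it into the margin increment gives
\[
d^\top\delta_{\mathrm T}=\epsilon\,\frac{d^\top w_\tau}{\|w_\tau\|_2}=\epsilon\|d\|_2\cos\theta ,
\]
using the definition of $\cos\theta$. If $d=0$, the angle is undefined but the gain is $0$, so I will treat that case separately or exclude it. Since $\cos\theta$ ranges over $[-1,1]$, the gain is negative exactly when $\cos\theta<0$.

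\textbf{Step 4 (part 3).} Set $w_\tau=u$ and $w_\sigma=2u$ with $u\neq 0$. Then $d=-u$, and by Step 1 the unique target-only optimizer is $\epsilon u/\|u\|_2$. Its margin gain is $-u^\top(\epsilon u/\|u\|_2)=-\epsilon\|u\|_2<0$ for every $\epsilon>0$, which is the case $\cos\theta=-1$. The remaining rows of $W$ and the bias $b$ can be chosen arbitrarily.

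The version stated in the main text, Theorem~\ref{thm:main_margin_vs_target}, is exactly parts 1 and 2, so it follows immediately.

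None of these steps is a real obstacle. The only care needed is the degenerate cases $d=0$ or $w_\tau=0$, where the angle or the optimizer is not well defined, and being precise about uniqueness, which the statement uses in part 3.
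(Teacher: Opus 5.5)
Your proposal is correct and follows essentially the same route as the paper. The paper also uses the affine identity $m(h+\delta)-m(h)=d^\top\delta$, Cauchy--Schwarz over $\|\delta\|_2\le\epsilon$ with the bound attained at $\epsilon c/\|c\|_2$, substitution of $\delta_{\mathrm T}$ into the margin increment, and the example $w_\tau=u$, $w_\sigma=2u$; your equality-case argument for uniqueness is a small but useful addition, since the paper asserts uniqueness of the target-only optimizer in part 3 without justifying it.
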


\begin{proof}
We prove each claim in a direct, linear sequence.

Because $z_i(h)=w_i^\top h + b_i$, we have for any $\delta$:
\[
z_i(h+\delta)=w_i^\top(h+\delta)+b_i=z_i(h)+w_i^\top\delta.
\]

Using the previous identity,
\begin{align*}
m(h+\delta)-m(h)
&=\bigl(z_\tau(h+\delta)-z_\sigma(h+\delta)\bigr)-\bigl(z_\tau(h)-z_\sigma(h)\bigr)\\
&=\bigl(w_\tau^\top\delta - w_\sigma^\top\delta\bigr)\\
&=(w_\tau-w_\sigma)^\top\delta\\
&=d^\top\delta.
\end{align*}
So maximizing the margin after intervention is exactly the same as maximizing $d^\top\delta$
subject to $\|\delta\|_2\le\epsilon$.

By Cauchy--Schwarz, for any feasible $\delta$:
\[
d^\top\delta \le \|d\|_2\|\delta\|_2 \le \|d\|_2\,\epsilon.
\]
So no intervention can improve the margin by more than $\epsilon\|d\|_2$.

This upper bound is achieved by choosing $\delta$ parallel to $d$ with full norm $\epsilon$, i.e.
\[
\delta_{\mathrm{M}}=\epsilon\frac{d}{\|d\|_2}
\quad\Rightarrow\quad
d^\top\delta_{\mathrm{M}}=\epsilon\|d\|_2.
\]
Thus $m(h+\delta_{\mathrm{M}})-m(h)=\epsilon\|d\|_2$, proving Part (1).

\paragraph{The target-only problem and compute its margin effect.}
Maximizing $z_\tau(h+\delta)$ is the same as maximizing $w_\tau^\top\delta$
(since $z_\tau(h)$ is constant w.r.t.\ $\delta$):
\[
\arg\max_{\|\delta\|_2\le\epsilon} z_\tau(h+\delta)
=
\arg\max_{\|\delta\|_2\le\epsilon} w_\tau^\top\delta.
\]
By the same Cauchy--Schwarz argument, a maximizer is
\[
\delta_{\mathrm{T}}=\epsilon\frac{w_\tau}{\|w_\tau\|_2}.
\]
Now plug this $\delta_{\mathrm{T}}$ into the margin change formula from Step 2:
\[
m(h+\delta_{\mathrm{T}})-m(h)=d^\top\delta_{\mathrm{T}}
=\epsilon\frac{d^\top w_\tau}{\|w_\tau\|_2}.
\]
Write $d^\top w_\tau = \|d\|_2\|w_\tau\|_2\cos\theta$ (definition of angle), giving
\[
m(h+\delta_{\mathrm{T}})-m(h)=\epsilon\|d\|_2\cos\theta,
\]
proving Part (2). This shows target-only is optimal for a different linear functional, and its
pairwise preference improvement depends on the alignment between $w_\tau$ and $d$.

\paragraph{A case where target-only decreases the margin.}
Choose any nonzero $u$ and set $w_\tau=u$ and $w_\sigma=2u$.
Then $d=w_\tau-w_\sigma=-u$ and $\delta_{\mathrm{T}}=\epsilon u/\|u\|_2$.
Therefore,
\[
m(h+\delta_{\mathrm{T}})-m(h)=d^\top\delta_{\mathrm{T}}
=(-u)^\top\left(\epsilon\frac{u}{\|u\|_2}\right)=-\epsilon\|u\|_2<0,
\]
proving Part (3).
\end{proof}

}

\section{Use of Large Language Models}
We used Large Language Models (LLMs) to aid or polish the manuscript text. Specifically, LLMs were used to improve grammar, phrasing, and clarity of exposition; they were also used for code debugging.

\end{document}